\newtheorem{definition}{Definition}
\newtheorem{theorem}{Theorem}
\newtheorem{example}{Example}
\journal{Neurocomputing}
\begin{document}
\begin{frontmatter}

\title{Block building programming for symbolic regression}

\author[lhd,UCAS]{Chen Chen}
\ead{chenchen@imech.ac.cn}
\author[lhd]{Changtong Luo\corref{cor1}}
\ead{luo@imech.ac.cn}
\author[lhd,UCAS]{Zonglin Jiang}
\ead{zljiang@imech.ac.cn}
 
\cortext[cor1]{Corresponding author}
\address[lhd]{State Key Laboratory of High Temperature Gas Dynamics, Institute of Mechanics, Chinese Academy of Sciences, Beijing 100190, China}
\address[UCAS]{School of Engineering Sciences, University of Chinese Academy of Sciences, Beijing 100049, China}

\begin{abstract}
Symbolic regression that aims to detect underlying data-driven models has become increasingly important for industrial data analysis. For most existing algorithms such as genetic programming (GP), the convergence speed might be too slow for large-scale problems with a large number of variables. This situation may become even worse with increasing problem size. The aforementioned difficulty makes symbolic regression limited in practical applications. Fortunately, in many engineering problems, the independent variables in target models are separable or partially separable. This feature inspires us to develop a new approach, block building programming (BBP). BBP divides the original target function into several blocks, and further into factors. The factors are then modeled by an optimization engine (e.g. GP). Under such circumstances, BBP can make large reductions to the search space. The partition of separability is based on a special method, block and factor detection. Two different optimization engines are applied to test the performance of BBP on a set of symbolic regression problems. Numerical results show that BBP has a good capability of structure and coefficient optimization with high computational efficiency.
\end{abstract}

\begin{keyword}
Symbolic regression \sep Separable function \sep Block building programming \sep Genetic programming
\end{keyword}

\end{frontmatter}

\section{Introduction}
\label{Section1}
Data-driven modeling of complex systems has become increasingly important for industrial data analysis when the experimental model structure is unknown or wrong, or the concerned system has changed \cite{Salmeron2017,Yan2016}. Symbolic regression aims to find a data-driven model that can describe a given system based on observed input-response data, and plays an important role in different areas of engineering such as signal processing \cite{Seera2017}, system identification \cite{Ugalde2015}, industrial data analysis \cite{Luo2015-AST}, and industrial design \cite{Parque2017}. Unlike conventional regression methods that require a mathematical model of a given form, symbolic regression is a data-driven approach to extract an appropriate model from a space of all possible expressions $\mathcal{S}$ defined by a set of given binary operations (e.g. $+$, $-$, $\times$, $\div$) and mathematical functions (e.g. $\sin$, $\cos$, $\exp$, $\ln$), which can be described as follows:
\begin{equation}
\label{SRdef}
f^*=\arg\min_{f \in \mathcal{S}}\sum_i\left\|{f(\mathbf{x}^{(i)})-y_i}\right\|,
\end{equation}
where $\mathbf{x}^{(i)}\in{\mathbb{R}^d}$ and $y_i\in{\mathbb{R}}$ are sampling data. $f$ is the target model and $f^*$ is the data-driven model. Symbolic regression is a kind of non-deterministic polynomial (NP) problem, which simultaneously optimizes the structure and coefficient of a target model. How to use an appropriate method to solve a symbolic regression problem is considered as a kaleidoscope in this research field \cite{Chen2017-ICNC,Peng2014,Sotto2017}.

Genetic programming (GP) \cite{Koza1992} is a classical method for symbolic regression. The core idea of GP is to apply Darwin's theory of natural evolution to the artificial world of computers and modeling. Theoretically, GP can obtain accurate results, provided that the computation time is long enough. However, describing a large-scale target model with a large number of variables is still a challenging task. This situation may become even worse with increasing problem size (increasing number of independent variables and range of these variables). This is because the target model with a large number of variables may result in large search depth and high computational costs of GP. The convergence speed of GP may then be too slow. This makes GP very inconvenient in engineering applications.

Apart from basic GP, two groups of methods for symbolic regression have been studied. The first group focused on evolutionary strategy, such as grammatical evolution \cite{ONeil2003} and parse-matrix evolution \cite{Luo2012-PME}. These variants of GP can simplify the coding process. Gan et al. \cite{Gan2009} introduced a clone selection programming method based on an artificial immune system. Karaboga et al. \cite{Karaboga2012} proposed an artificial bee colony programming method based on the foraging behavior of honeybees. However, these methods are still based on the idea of biological simulation processes. This helps little to improve the convergence speed when solving large-scale problems.

The second branch exploited strategies to reduce the search space of the solution. McConaghy \cite{McConaghy2011} presented the first non-evolutionary algorithm, fast function eXtraction (FFX), based on pathwise regularized learning, which confined its search space to generalized linear space. However, the computational efficiency is gained at the sacrifice of losing the generality of the solution. More recently, Worm \cite{Worm2016} proposed a deterministic machine-learning algorithm, prioritized grammar enumeration (PGE). PGE merges isomorphic chromosome presentations (equations) into a canonical form. The author argues that it could make a large reduction to the search space. However, debate still remains on how the simplification affects the solving process \cite{Kinzett2008,Kinzett2009,McRee2010}.

In many scientific or engineering problems, the target models are separable. Luo et al. \cite{Luo2017} presented a divide-and-conquer (D\&C) method for GP. The authors indicated that detecting the correlation between each variable and the target function could accelerate the solving process. D\&C can decompose a concerned separable model into a number of sub-models, and then optimize them. The separability is probed by a special method, the bi-correlation test (BiCT). However, the D\&C method can only be valid for an additively/multiplicatively separable target model (see Definition \ref{def1-add-mul} in Section \ref{Section2}). Many practical models are out of the scope of the separable model (Eq. (\ref{Separpluseq}) and (\ref{Separtimeseq})). This limits the D\&C method for further applications.

In this paper, a more general separable model that may involve mixed binary operators, namely plus ($+$), minus ($-$), times ($\times$), and division ($\div$), is introduced. In order to get the structure of the generalized separable model, a new approach, block building programming (BBP), for symbolic regression is also proposed. BBP reveals the target separable model using a block and factor detection process, which divides the original model into a number of blocks, and further into factors. Meanwhile, binary operators could also be determined. The method can be considered as a bi-level D\&C method. The separability is detected by a generalized BiCT method. Numerical results show that BBP can obtain the target functions more reliably, and produce extremely large accelerations of the GP method for symbolic regression.

The presentation of this paper is organized as follows. Section \ref{Section2} is devoted to the more general separable model. The principle and procedure of the BPP approach are described in Section \ref{Section3}. Section \ref{Section4} presents numerical results, discussions, and efficiency analysis for the proposed method. In the last section, conclusions are drawn with future works.

\section{Definition of separability}
\label{Section2}

\subsection{Examples}
As previously mentioned, in many applications, the target models are separable. Below, two real-world problems are given to illustrate separability.

\begin{example}
\label{ex2-1}
When developing a rocket engine, it is crucial to model the internal flow of a high-speed compressible gas through the nozzle. The closed-form expression for the mass flow through a choked nozzle \cite{Anderson2006} is
\begin{equation}
\label{nozzle}
\dot m = \frac{{{p_0}{A^*}}}{{\sqrt {{T_0}} }}\sqrt {\frac{\gamma }{R}{{\left( {\frac{2}{{\gamma  + 1}}} \right)}^{{{\left( {\gamma  + 1} \right)} \mathord{\left/
 {\vphantom {{\left( {\gamma  + 1} \right)} {\left( {\gamma  - 1} \right)}}} \right.
 \kern-\nulldelimiterspace} {\left( {\gamma  - 1} \right)}}}}},
\end{equation}
where $p_0$ and $T_0$ represent the total pressure and total temperature, respectively. $A^*$ is the sonic throat area. $R$ is the specific gas constant, which is a different value for different gases. $\gamma  = {c_p}/{c_v}$, where ${c_v}$ and ${c_p}$ are the specific heat at constant volume and constant pressure. The sub-functions of the five independent variables, $p_0$, $T_0$, $A^*$, $R$, and $\gamma$ are all multiplicatively separable in Eq. (\ref{nozzle}). That is, the target model can be re-expressed as follows
\begin{equation}
\label{nozzle2}
\begin{aligned}
\dot m & = f\left( {{p_0},{A^*},{T_0},R,\gamma } \right) \\
& = {f _1}\left( {{p_0}} \right) \times {f _2}\left( {{A^*}} \right) \times {f _3}\left( {{T_0}} \right) \times {f _4}\left( R \right) \times {f _5}\left( \gamma  \right).
\end{aligned}
\end{equation}
\end{example}

The target function with five independent variables can be divided into five sub-functions that are multiplied together, and each with only one independent variable. Furthermore, the binary operator between two sub-functions could be plus ($+$) or times ($\times$). 

\begin{example}
\label{ex2-2}
In aircraft design, the lift coefficient of an entire airplane \cite{Raymer2012} can be expressed as
\begin{equation}
\label{CL}
{C_L} = {C_{L\alpha }}\left( {\alpha  - {\alpha_0}} \right) + {C_{L{\delta _e}}}{\delta _e}\frac{{{S_{{\text{HT}}}}}}{{{S_{{\text{ref}}}}}},
\end{equation}
where $C_{L\alpha}$ and $C_{L{\delta _e}}$ are the lift slope of the body wings and tail wings. $\alpha$, $\alpha_0$, and $\delta _e$ are the angle of attack, zero-lift angle of attack, and deflection angle of the tail wing, respectively. $S_{\text{HT}}$ and $S_{\text{ref}}$ are the tail wing area and reference area, respectively. Note that the sub-functions of the variable $C_{L\alpha }$, $C_{L{\delta _e}}$, $\delta _e$, $S_{{\text{HT}}}$, and $S_{{\text{ref}}}$ are separable, but not purely additively/multiplicatively separable. Variables $\alpha$ and ${\alpha_0}$ are not separable, but their combination $\left( \alpha,\alpha_0 \right)$ can be considered separable. Hence, Eq. (\ref{CL}) can be re-expressed as
\begin{equation}
\label{CL2}
\begin{aligned}
{C_L} & =  f\left( {{C_{L\alpha }},\alpha ,{\alpha _0},{C_{L{\delta _e}}},{\delta _e},{S_{{\text{HT}}}},{S_{{\text{ref}}}}} \right) \\
& = {f _1}\left( {{C_{L\alpha }}} \right) \times {f _2}\left( {\alpha ,{\alpha _0}} \right) + {f _3}\left( {{C_{L{\delta _e}}}} \right) \times {f _4}\left( {{\delta _e}} \right)\times{f _5}\left( {{S_{{\text{HT}}}}} \right) \times {f _6}\left( {{S_{{\text{ref}}}}} \right).
\end{aligned}
\end{equation}
In this example, the target function is divided into six sub-functions.
\end{example}

\subsection{Additively/multiplicatively separable model}
The additively and multiplicatively separable models introduced in \cite{Luo2017} are briefly reviewed below.
\begin{definition}
\label{def1-add-mul}
A scalar function $f\left( {\mathbf{x}} \right)$ with $n$ continuous variables ${\mathbf{x}} = {\left[ {{x_1},{x_2}, \cdots ,{x_n}} \right]^\top}$ ($f:{\mathbb{R}^n} \mapsto \mathbb{R}$, $\mathbf{x} \in {\mathbb{R}^n}$) is additively separable if and only if it can be rewritten as
\begin{equation}
\label{Separpluseq}
f\left( {\mathbf{x}} \right) = {\alpha_0} + \sum\limits_{i = 1}^m {{\alpha_i}{f_i}\left( {{{\mathbf{I}}^{\left( i \right)}}{\mathbf{x}}} \right)} ,
\end{equation}
and is multiplicatively separable if and only if it can be rewritten as
\begin{equation}
\label{Separtimeseq}
f\left( {\mathbf{x}} \right) = {\alpha_0} \cdot \prod\limits_{i = 1}^m {{f_i}\left( {{{\mathbf{I}}^{\left( i \right)}}{\mathbf{x}}} \right)}.
\end{equation}
In Eq. (\ref{Separpluseq}) and (\ref{Separtimeseq}), ${{\mathbf{I}}^{\left( i \right)}} \in {\mathbb{R}^{{n_i} \times n}}$ is the partitioned matrix of the identity matrix ${\mathbf{I}} \in {\mathbb{R}^{n \times n}}$, namely ${\mathbf{I}} = {\left[ {\begin{array}{*{20}{c}}
  {{{\mathbf{I}}^{\left( 1 \right)}}}&{{{\mathbf{I}}^{\left( 2 \right)}}}& \cdots &{{{\mathbf{I}}^{\left( m \right)}}} 
\end{array}} \right]^\top}$, $\sum\nolimits_{i = 1}^m {{n_i}}  = n$. ${{\mathbf{I}}^{\left( i \right)}}{\mathbf{x}}$ is the variables set with $n_i$ elements. $n_i$ represents the number of variables in sub-function $f_i$. Sub-function $f_i$ is a scalar function such that $f_i:{\mathbb{R}^{{n_i}}} \mapsto \mathbb{R}$. $\alpha_i$ is a constant coefficient.
\end{definition}

\subsection{Partially/completely separable model}
\label{Section2.1}
Based on the definition of additive/multiplicative separability, the new separable model with mixed binary operators, namely plus ($+$), minus ($-$), times ($\times$), and division ($\div$) are defined as follows.

\begin{definition}
\label{def1}
A scalar function $f\left( {\mathbf{x}} \right)$ with $n$ continuous variables ${\mathbf{x}} = {\left[ {{x_1},{x_2}, \cdots ,{x_n}} \right]^\top}$ ($f:{\mathbb{R}^n} \mapsto \mathbb{R}$, $\mathbf{x} \in {\mathbb{R}^n}$) is partially separable if and only if it can be rewritten as
\begin{equation}
\label{SeparFuncEqu}
f\left( {\mathbf{x}} \right) = {\alpha_0}{ \otimes _1}{\alpha_1}{f_1}\left( {{{\mathbf{I}}^{\left( 1 \right)}}{\mathbf{x}}} \right){ \otimes _2}{\alpha_2}{f_2}\left( {{{\mathbf{I}}^{\left( 2 \right)}}{\mathbf{x}}} \right){ \otimes _3} \cdots { \otimes _m}{\alpha_m}{f_m}\left( {{{\mathbf{I}}^{\left( m \right)}}{\mathbf{x}}} \right),
\end{equation}
where ${{\mathbf{I}}^{\left( i \right)}} \in {\mathbb{R}^{{n_i} \times n}}$ is the partitioned matrix of the identity matrix ${\mathbf{I}} \in {\mathbb{R}^{n \times n}}$, namely ${\mathbf{I}} = {\left[ {\begin{array}{*{20}{c}}
  {{{\mathbf{I}}^{\left( 1 \right)}}}&{{{\mathbf{I}}^{\left( 2 \right)}}}& \cdots &{{{\mathbf{I}}^{\left( m \right)}}} 
\end{array}} \right]^\top}$, $\sum\nolimits_{i = 1}^m {{n_i}}  = n$. ${{\mathbf{I}}^{\left( i \right)}}{\mathbf{x}}$ is the variables set with $n_i$ elements. $n_i$ represents the number of variables in sub-function $f_i$. Sub-function $f_i$ is a scalar function such that $f_i:{\mathbb{R}^{{n_i}}} \mapsto \mathbb{R}$. The binary operator $\otimes_i$ can be plus ($+$) and times ($\times$). $\alpha_i$ is a constant coefficient.
\end{definition}

Note that the binary operators minus ($-$) and division ($/$) are not included in $\otimes$ for simplicity. This does not affect much of its generality, since minus ($-$) could be regarded as $\left(  -  \right) = \left( { - 1} \right) \cdot \left(  +  \right)$, and sub-function could be treated as ${\tilde f_i}\left(  \cdot  \right) = 1/{f_i}\left(  \cdot  \right)$ if only ${f _i}\left(  \cdot  \right) \ne 0$.
 
\begin{definition}
\label{def2}
A scalar function $f\left( {\mathbf{x}} \right)$ with $n$ continuous variables ($f:{\mathbb{R}^n} \mapsto \mathbb{R}$, $\mathbf{x} \in {\mathbb{R}^n}$) is completely separable if and only if it can be rewritten as Eq. (\ref{SeparFuncEqu}) and $n_i=1$ for all $i=1, 2, \cdots, m$.
\end{definition}

\section{Block building programming}
\label{Section3}

\subsection{Bi-correlation test}
\label{Section3.1}
The bi-correlation test (BiCT) method proposed in \cite{Luo2017} is used to detect whether a concerned target model is additively or multiplicatively separable. BiCT is based on random sampling and the linear correlation method.

\subsection{Block and factor detection}
\label{Section3.2}
The additively or multiplicatively separable target function can be easily detected by the BiCT. However, how to determine each binary operator $\otimes _i$ of Eq. (\ref{SeparFuncEqu}) is a critical step in BBP. One way is to recognize each binary operator $\otimes _i$ sequentially with random sampling and linear correlation techniques. For example, a given target function of six variables with five sub-functions is given below
\begin{equation}
\label{example3-1}
\begin{aligned}
  f\left( {{x_1}, \cdots ,{x_6}} \right) &  = {\alpha _0}{ \otimes _1}{\alpha _1}{f_1}\left( {{x_1}} \right){ \otimes _2}{\alpha _2}{f_2}\left( {{x_2},{x_3}} \right){ \otimes _3}{\alpha _3}{f_3}\left( {{x_4}} \right){ \otimes _4}{\alpha _4}{f_4}\left( {{x_5}} \right){ \otimes _5}{\alpha _5}{f_5}\left( {{x_6}} \right) \\ 
   &  = {\alpha _0} + {\alpha _1}{f_1}\left( {{x_1}} \right) \times {\alpha _2}{f_2}\left( {{x_2},{x_3}} \right) + {\alpha _3}{f_3}\left( {{x_4}} \right) + {\alpha _4}{f_4}\left( {{x_5}} \right) \times {\alpha _5}{f_5}\left( {{x_6}} \right). \\ 
\end{aligned}
\end{equation}
The first step is to determine the binary operator $\otimes_1$. The six variables are sampled with the variable $x_1$ changed, and the remaining variables $x_2, x_3, \cdots, x_6$ fixed. However, it is found that the variable $x_1$ cannot be separable from the variables $x_2,x_3,\cdots,x_6$, since the operation order of the two binary operators plus ($+$) and times ($\times$) is different. This indicates that recognizing each $\otimes _i$ sequentially is difficult.

To overcome the aforementioned difficulty, block and factor detection is introduced, which helps to recognize the binary operator $\otimes _i$ more effectively. Before introducing this method, a theorem is given as follows.

\begin{theorem}
\label{theorem3}
Eq. (\ref{SeparFuncEqu}) can be equivalently written as
\begin{equation}
\label{SeparFuncEquDetail}
f\left( {\mathbf{x}} \right) = {\beta_0} + \sum\limits_{i = 1}^p {{\beta_i}{\varphi _i}\left( {{{\mathbf{I}}^{\left( i \right)}}{\mathbf{x}}} \right)}  = {\beta_0} + \sum\limits_{i = 1}^p {{\beta_i}\prod\limits_{j = 1}^{{q_i}} {{\psi _{i,j}}\left( {{\mathbf{I}}_j^{\left( i \right)}{\mathbf{x}}} \right)} } ,
\end{equation}
where ${\mathbf{x}} = {\left[ {{x_1},{x_2}, \cdots ,{x_n}} \right]^\top} \in {\mathbb{R}^n}$ and $f:{\mathbb{R}^n} \mapsto \mathbb{R}$. ${{\mathbf{I}}^{\left( i \right)}} \in {\mathbb{R}^{{s_i} \times n}}$ is the partitioned matrix of the identity matrix ${\mathbf{I}} \in {\mathbb{R}^{n \times n}}$, namely ${\mathbf{I}} = {\left[ {\begin{array}{*{20}{c}}
  {{{\mathbf{I}}^{\left( 1 \right)}}}&{{{\mathbf{I}}^{\left( 2 \right)}}}& \cdots &{{{\mathbf{I}}^{\left( p \right)}}} 
\end{array}} \right]^\top}$, $\sum\nolimits_{i = 1}^p {{s_i}}  = n$. ${{{\mathbf{I}}_{j}^{\left( i \right)}}\in {\mathbb{R}^{{s_{i,j}} \times n}}}$ is the partitioned matrix of the ${{\mathbf{I}}^{\left( i \right)}}$, namely ${{\mathbf{I}}^{\left( i \right)}} = \left[ {\begin{array}{*{20}{c}}
  {{\mathbf{I}}_1^{\left( i \right)}}&{{\mathbf{I}}_2^{\left( i \right)}}& \cdots &{{\mathbf{I}}_{{q_i}}^{\left( i \right)}} 
\end{array}} \right]^\top$, $\sum\nolimits_{j = 1}^{q_i} {{s_{i,j}}}  = s_i$. $p \geqslant 1$, ${q_i} \geqslant 1$, $\sum\nolimits_{i = 1}^p {{q_i}}  = m$. Sub-functions ${{\varphi _i}}$ and ${\psi _{i,j}}$ are scalar functions such that ${\varphi _i}:{\mathbb{R}^{{s_{i}}}} \mapsto \mathbb{R}$ and ${\psi _{i,j}}:{\mathbb{R}^{{s_{i,j}}}} \mapsto \mathbb{R}$.

\end{theorem}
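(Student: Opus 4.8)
The plan is to obtain Eq.~(\ref{SeparFuncEquDetail}) by simply reading off the parse structure of Eq.~(\ref{SeparFuncEqu}) under the usual precedence convention that multiplication binds more tightly than addition. Since every $\otimes_i$ lies in $\{+,\times\}$, the operators equal to ``$+$'' cut the sequence of summands $\alpha_1 f_1(\mathbf{I}^{(1)}\mathbf{x}),\ldots,\alpha_m f_m(\mathbf{I}^{(m)}\mathbf{x})$ into maximal consecutive runs whose internal operators are all ``$\times$''. Each such run is a \emph{block}, and the individual $\alpha_j f_j$ inside it are its \emph{factors}. No distributive manipulation is needed: the right-hand side of Eq.~(\ref{SeparFuncEquDetail}) is literally Eq.~(\ref{SeparFuncEqu}) after the ``$\times$'' operations inside each block are carried out and the resulting blocks are added.

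To make this precise I would first fix the grouping. Let $2\le k_1<k_2<\cdots<k_{p-1}\le m$ be exactly the indices $i\ge 2$ with $\otimes_i=+$; together with the endpoints these determine $p$ index blocks $B_1,\ldots,B_p$ partitioning $\{1,\ldots,m\}$, where within each $B_k$ the separating operators $\otimes_i$ are all ``$\times$''. Writing $B_k=\{j_{k,1}<\cdots<j_{k,q_k}\}$ with $q_k=|B_k|$, I set $\psi_{k,\ell}:=f_{j_{k,\ell}}$, $\varphi_k:=\prod_{\ell=1}^{q_k}\psi_{k,\ell}$, and $\beta_k:=\prod_{j\in B_k}\alpha_j$. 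For the leading term $\alpha_0\otimes_1$ there are two cases: if $\otimes_1=+$, take $\beta_0:=\alpha_0$; if $\otimes_1=\times$, absorb $\alpha_0$ into the first block by replacing $\beta_1$ with $\alpha_0\beta_1$ and take $\beta_0:=0$. With these definitions the chain in Eq.~(\ref{SeparFuncEqu}) is, term by term, $\beta_0+\sum_{k=1}^p\beta_k\varphi_k(\mathbf{I}^{(k)}\mathbf{x})$, which is exactly Eq.~(\ref{SeparFuncEquDetail}).

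It then remains to check the partitioned-matrix and counting assertions. Because the $\mathbf{I}^{(i)}$ of Eq.~(\ref{SeparFuncEqu}) form a partition of $\mathbf{I}\in\mathbb{R}^{n\times n}$ (so the variable sets $\mathbf{I}^{(i)}\mathbf{x}$ are pairwise disjoint and exhaust $x_1,\ldots,x_n$), concatenating the row-blocks indexed by $B_k$ yields a matrix $\mathbf{I}^{(k)}\in\mathbb{R}^{s_k\times n}$ with $s_k=\sum_{j\in B_k}n_j$; stacking these over $k=1,\ldots,p$ recovers $\mathbf{I}$, giving $\sum_{k=1}^p s_k=n$. Inside a block the factor matrices $\mathbf{I}_\ell^{(k)}$ are just the original $\mathbf{I}^{(j_{k,\ell})}$, of size $s_{k,\ell}=n_{j_{k,\ell}}$, whence $\sum_{\ell=1}^{q_k}s_{k,\ell}=s_k$. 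Finally $\sum_{k=1}^p q_k=\sum_k|B_k|=m$ since the $B_k$ partition $\{1,\ldots,m\}$, and $p\ge 1$, $q_k\ge 1$ because the blocks are nonempty. The claimed equality holds pointwise in $\mathbf{x}$ since, as noted, it is Eq.~(\ref{SeparFuncEqu}) regrouped.

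I do not expect any analytic obstacle here; the content is associativity of $+$ and $\times$ together with the disjointness of the variable blocks guaranteed by Definition~\ref{def1}. The part I would write most carefully is the index bookkeeping — the definition of the $B_k$ from the positions of the ``$+$'' operators, the induced (re)indexing of $\psi_{i,j}$ and of the partitioned matrices $\mathbf{I}^{(i)}$ and $\mathbf{I}_j^{(i)}$ — and the two edge cases for the leading term $\alpha_0\otimes_1$, together with an explicit statement that ``$\times$'' takes precedence over ``$+$'' so that the expression in Eq.~(\ref{SeparFuncEqu}) is unambiguous.
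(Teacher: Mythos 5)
Your proposal is correct and follows essentially the same route as the paper's own proof: the positions of the plus operators cut the chain into blocks, the times operators inside each block give the factors, and the leading term $\alpha_0 \otimes_1$ is handled by the same two cases ($+$ versus $\times$) as in the appendix, your bookkeeping with the index sets $B_k$ being just a tidier version of the paper's sequential placement of the $+$ and $\times$ operators (indeed, absorbing $\alpha_0$ into the first block only when $\otimes_1=\times$ is the reading more consistent with the stated precedence convention than the paper's Eq.~(\ref{Appendix-10}), which distributes $\alpha_0$ over all blocks). The only piece you leave implicit is the paper's one-sentence converse (the ``necessary condition''), namely that an expression of the form Eq.~(\ref{SeparFuncEquDetail}) can be read back as a chain of the form Eq.~(\ref{SeparFuncEqu}); since your construction is a pure regrouping with no distributive step, that direction is immediate and worth a single closing remark.
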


\begin{proof}
See \ref{appendixA}.
\end{proof}

\begin{definition}
\label{def3-block_factor}
The sub-function ${{\varphi _i}\left( {{{\mathbf{I}}^{\left( i \right)}}{\mathbf{x}}} \right)}$ is the $i$-th block of Eq. (\ref{SeparFuncEquDetail}), and the sub-function ${{\psi _{i,j}}\left( {{\mathbf{I}}_j^{\left( i \right)}{\mathbf{x}}} \right)}$ is the $j$-th factor of the $i$-th block.
\end{definition}

It is observed from Eq. (\ref{example3-1}) that there are three blocks, ${\varphi _1}\left( {{x_1},{x_2},{x_3}} \right)$, ${\varphi _2}\left( {{x_4}} \right)$, and ${\varphi _3}\left( {{x_5},{x_6}} \right)$. The structure of Eq. (\ref{example3-1}) is given as the following equation, based on the Theorem \ref{theorem3},

\begin{equation}
\label{SeparFuncEquDetail-2}
\begin{aligned}
  f\left( {{x_1}, \cdots ,{x_6}} \right) &  = {\alpha _0} + {\alpha _1}{f_1}\left( {{x_1}} \right) \times {\alpha _2}{f_2}\left( {{x_2},{x_3}} \right) + {\alpha _3}{f_3}\left( {{x_4}} \right) + {\alpha _4}{f_4}\left( {{x_5}} \right) \times {\alpha _5}{f_5}\left( {{x_6}} \right) \\ 
   &  = {\beta _0} + {\beta _1}{\varphi _1}\left( {{x_1},{x_2},{x_3}} \right) + {\beta _2}{\varphi _2}\left( {{x_4}} \right) + {\beta _3}{\varphi _3}\left( {{x_5},{x_6}} \right) \\ 
   &  = {\beta _0} + {\beta _1}\boxed{{\psi _{1,1}}\left( {{x_1}} \right) \times {\psi _{1,2}}\left( {{x_2},{x_3}} \right)} + {\beta _2}\boxed{{\psi _{2,1}}\left( {{x_4}} \right)} + {\beta _3}\boxed{{\psi _{3,1}}\left( {{x_5}} \right) \times {\psi _{3,2}}\left( {{x_6}} \right)}. \\ 
\end{aligned}
\end{equation}

The first block has two factors, ${\psi _{1,1}}\left( {{x_1}} \right)$ and ${\psi _{1,2}}\left( {{x_2},{x_3}} \right)$. The second block has only one factor, ${\psi _{2,1}}\left( {{x_4}} \right)$. The last block also has two factors, ${\psi _{3,1}}\left( {{x_5}} \right)$ and ${\psi _{3,2}}\left( {{x_6}} \right)$. Note that variables $x_2$ and $x_3$ are partially separable, while variables $x_1$, $x_4$, $x_5$, and $x_6$ are all completely separable.

From the previous discussion, it is straightforward to show that in Eq. (\ref{SeparFuncEquDetail}), the original target function $f\left( {\mathbf{x}} \right)$ is first divided into several blocks ${\varphi _i}\left(  \cdot  \right)$ with global constants $c_i$. Meanwhile, all binary plus ($+$) operators are determined, which is based on the separability detection of the additively separable ${\varphi _i}\left(  \cdot  \right)$ by BiCT, where $i=1,2,\cdots,p$. Next, each block ${\varphi _i}\left(  \cdot  \right)$ is divided into several factors ${\psi _{i,j}}\left(  \cdot  \right)$. Meanwhile, all binary times ($\times$) operators are determined, which is based on the separability detection of multiplicatively separable ${\psi _{i,j}}\left(  \cdot  \right)$ by BiCT method, where $j=1,2,\cdots,q_i$. It is clear that the process of block and factor detection does not require any special optimization engine.

\subsection{Factor modeling}
\label{Section3.3}
The mission of symbolic regression is to optimize both the structure and coefficient of a target function that describes an input-response system. In block building programming (BBP), after the binary operators are determined, the original target function $f\left( {\mathbf{x}} \right)$ is divided into several factors ${\psi _{i,j}}\left(  \cdot  \right)$. In this section, we aim to find a proper way to model these factors. This problem is quite easy to be solved by an optimization algorithm, since the structure and coefficient of a factor ${\psi _{i,j}}\left(  \cdot  \right)$ can be optimized while the rest are kept fixed and unchanged.

Without the loss of generality, the factor ${\psi _{1,1}}\left( {{x_1},{x_2}, \cdots ,{x_{s_{1,1}}}} \right)$ in Eq. (\ref{SeparFuncEquDetail}) illustrates the implementation of the modeling process.

\begin{enumerate}[1.]
\item Let the matrix $\mathbf{X}$ be a set of $N$ sampling points for all $n$ independent variables,
\begin{equation}
\label{X}
{\mathbf{X}} = \left[ {\begin{array}{*{20}{c}}
  {{x_{11}}}&{{x_{12}}}& \cdots &{{x_{1,n}}} \\ 
  {x{}_{21}}&{{x_{22}}}& \cdots &{{x_{2,n}}} \\ 
   \vdots & \vdots &{}& \vdots  \\ 
  {{x_{N,1}}}&{{x_{N,2}}}& \cdots &{{x_{N,n}}} 
\end{array}} \right],
\end{equation}
where ${x_{i,j}}$ represents the $i$-th sampling point of the $j$-th independent variable, $i = 1,2, \cdots ,n;j = 1,2, \cdots ,N$.
\item Keep variables ${{x_1},{x_2}, \cdots ,{x_{s_{1,1}}}}$ being randomly sampled. Let the sampling points of the variables in local block (block 1), ${x_{{s_{1,1}} + 1}},{x_{{s_{1,1}} + 2}}, \cdots ,{x_{{s_1}}}$, be fixed to any two given points $x_A$ and $x_B$ $\left( {\forall {x_A},{x_B} \in \left[ {a,b} \right]} \right)$, respectively. In addition, let the sampling points of variables in other blocks (blocks 2 to $p$), namely ${x_{{s_1} + 1}},{x_{{s_1} + 2}}, \cdots ,{x_n}$, be fixed to a given point $x_G$ ($\forall x_G \in \left[ {a,b} \right]$). We obtain
\begin{equation}
\label{X1}
{{\mathbf{X}}_1} = \left[ {\begin{array}{*{20}{c}}
  {{x_{1,1}}}& \cdots &{{x_{1,{s_{1,1}}}}}&{x_{1,{s_{1,1}} + 1}^{\left( A \right)}}& \cdots &{x_{1,{s_1}}^{\left( A \right)}}&{x_{1,{s_1} + 1}^{\left( G \right)}}& \cdots &{x_{1,n}^{\left( G \right)}} \\ 
  {{x_{2,2}}}& \cdots &{{x_{2,{s_{1,1}}}}}&{x_{2,{s_{1,1}} + 1}^{\left( A \right)}}& \cdots &{x_{2,{s_1}}^{\left( A \right)}}&{x_{2,{s_1} + 1}^{\left( G \right)}}& \cdots &{x_{2,n}^{\left( G \right)}} \\ 
   \vdots &{}& \vdots & \vdots &{}& \vdots & \vdots &{}& \vdots  \\ 
  {{x_{N,1}}}& \cdots &{{x_{N,{s_{1,1}}}}}&{x_{N,{s_{1,1}} + 1}^{\left( A \right)}}& \cdots &{x_{N,{s_1}}^{\left( A \right)}}&{x_{N,{s_1} + 1}^{\left( G \right)}}& \cdots &{x_{2,n}^{\left( G \right)}} 
\end{array}} \right],
\end{equation}
and
\begin{equation}
\label{X2}
{{\mathbf{X}}_2} = \left[ {\begin{array}{*{20}{c}}
  {{x_{1,1}}}& \cdots &{{x_{1,{s_{1,1}}}}}&{x_{1,{s_{1,1}} + 1}^{\left( B \right)}}& \cdots &{x_{1,{s_1}}^{\left( B \right)}}&{x_{1,{s_1} + 1}^{\left( G \right)}}& \cdots &{x_{1,n}^{\left( G \right)}} \\ 
  {{x_{2,2}}}& \cdots &{{x_{2,{s_{1,1}}}}}&{x_{2,{s_{1,1}} + 1}^{\left( B \right)}}& \cdots &{x_{2,{s_1}}^{\left( B \right)}}&{x_{2,{s_1} + 1}^{\left( G \right)}}& \cdots &{x_{2,n}^{\left( G \right)}} \\ 
   \vdots &{}& \vdots & \vdots &{}& \vdots & \vdots &{}& \vdots  \\ 
  {{x_{N,1}}}& \cdots &{{x_{N,{s_{1,1}}}}}&{x_{N,{s_{1,1}} + 1}^{\left( B \right)}}& \cdots &{x_{N,{s_1}}^{\left( B \right)}}&{x_{N,{s_1} + 1}^{\left( G \right)}}& \cdots &{x_{2,n}^{\left( G \right)}} 
\end{array}} \right].
\end{equation}
\item Let ${\mathbf{\tilde X}} = {{\mathbf{X}}_1} - {{\mathbf{X}}_2} = \left[ {\begin{array}{*{20}{c}}
  {{{\mathbf{X}}_{{\text{train}}}}}&{\mathbf{0}} 
\end{array}} \right]$. Matrix ${{{\mathbf{X}}_{{\text{train}}}}}$ is a partition of matrix ${\mathbf{\tilde X}}$. Next, let ${{\mathbf{f}}_{\text{train}}}$ be the vector of which the $i$-th element is the function value of the $i$-th row of matrix ${\mathbf{\tilde X}}$, namely ${{\mathbf{f}}_{{\text{train}}}} = f\left( {\mathbf{\tilde X}} \right)$.
\item Substitute ${{\mathbf{f}}_{{\text{train}}}}$ and ${{{\mathbf{X}}_{{\text{train}}}}}$ into the fit models $y_{\text{train}} = \beta \cdot f^*\left( x_{\text{train}} \right)$. This step could be realized by an existing optimization engine (e.g. GP). Note that, the constant $\beta$ represent the fitting parameter of the function of variables ${x_{{s_{1,1}} + 1}},{x_{{s_{1,1}} + 2}}, \cdots ,{x_{{s_1}}}$, since these variables are unchanged during this process. We aim to obtain the optimization model $f^*$, and constant $\beta$ will be discarded.
\end{enumerate}

Other factors ${\psi _{i,j}}$ could be obtained in the same way. In fact, many state-of-the-art optimization engines are valid for BBP. Genetic programming methods (e.g. parse-matrix evolution (PME) \cite{Luo2012-PME} and GPTIPS \cite{Searson2010}), swarm intelligence methods (e.g. artificial bee colony programming (ABCP) \cite{Karaboga2012}), and global optimization methods (e.g. low-dimensional simplex evolution (LDSE) \cite{Luo2012-LDSE}) are all easy to power BBP.

\subsection{Block building programming}
\label{Section3.5}
Block-building programming (BBP) can be considered as a bi-level D\&C method, that is, the separability detection involves two processes (block and factor detection). In fact, BBP provides a framework of genetic programming methods or global optimization algorithms for symbolic regression. The main process of BBP is decomposed and presented in previous sections (Section \ref{Section3.1} to \ref{Section3.3}). Despite different optimization engines for factors modeling being used, the general procedure of BBP could be described as follows.

\begin{description}
\item {Procedure of BBP:}
\item {Step 1.} (Initialization) Input the dimension of the target function $D$, the set $S = \left\{ {i:i = 1,2, \cdots ,D} \right\}$ for initial variables subscript number, sampling interval $\left[ {a,b} \right]$, and the number of sampling points $N$. Generate a sampling set $\mathbf{X} \in \left[ {a,b} \right] \subset {\mathbb{R}^{{N} \times D}}$.
\item {Step 2.} (Block detection) The information (the subscript number of the local block and variables) of each block ${\varphi _i \left(  \cdot  \right)}$, $i = 1,2, \cdots ,p$, is detected and preserved iteratively by BiCT (that is, the additively separable block).
\item {Step 3.} (Factor detection) For each block ${\varphi _i \left(  \cdot  \right)}$, the information (the subscript number of the local block, factor, and variables) of each factor ${\psi _{i,j} \left(  \cdot  \right)}$, $j = 1,2, \cdots ,q_{i}$, is detected and preserved iteratively by BiCT (that is, the multiplicatively separable factor in local block).
\item {Step 4.} (Factor modeling) For the $j$-th factor ${\psi _{i,j} \left(  \cdot  \right)}$ in the $i$-th block, set the variables in sampling set $\mathbf{X}$ in blocks $\left\{ {1,2,\cdots,i-1,i+1,\cdots,p} \right\}$ to be fixed to $x_G$, and the variables in factors $\left\{ {1,2,\cdots,j-1,j+1,\cdots,q_{j}} \right\}$ of the $i$-th block to be fixed to $x_A$ and $x_B$. Let ${{\mathbf{\tilde X}}^{i,j}} = {\mathbf{X}}_1^{i,j} - {\mathbf{X}}_2^{i,j} = \left[ {\begin{array}{*{20}{c}}
  {{\mathbf{X}}_{{\text{train}}}^{i,j}}&{\mathbf{0}} 
\end{array}} \right]$ and ${\mathbf{f}}_{{\text{train}}}^{i,j} = f\left( {{{{\mathbf{\tilde X}}}^{i,j}}} \right)$. The optimization engine is then used.
\item {Step 5.} (Global assembling) Global parameter $\beta_k$, $k = 0,1, \cdots ,p$, can be linearly fitted by the equation ${{\mathbf{f}}_{{\text{train}}}} = {\beta_0} + \sum\limits_{i = 1}^p {{\beta_i}{\varphi _i}\left(  \cdot  \right)}  = {\beta_0} + \sum\limits_{i = 1}^p {{\beta_i}\prod\limits_{j = 1}^{{q_i}} {{\psi _{i,j}}\left( {{{\mathbf{X}}_{{\text{train}}}}} \right)} }$.
\end{description}

It is clear from the above procedure that the optimization process of BBP could be divided into two parts, inner and outer optimization. The inner optimization (e.g. LDSE and GPTIPS) is invoked to optimize the structure and coefficients of each factor, with the variables of other factors being fixed. The outer optimization aims to optimize the global parameters of the target model structure. An example procedure of BBP is provided in Fig. \ref{fig1}.

\begin{figure}
\centering
\includegraphics[width=0.8\linewidth]{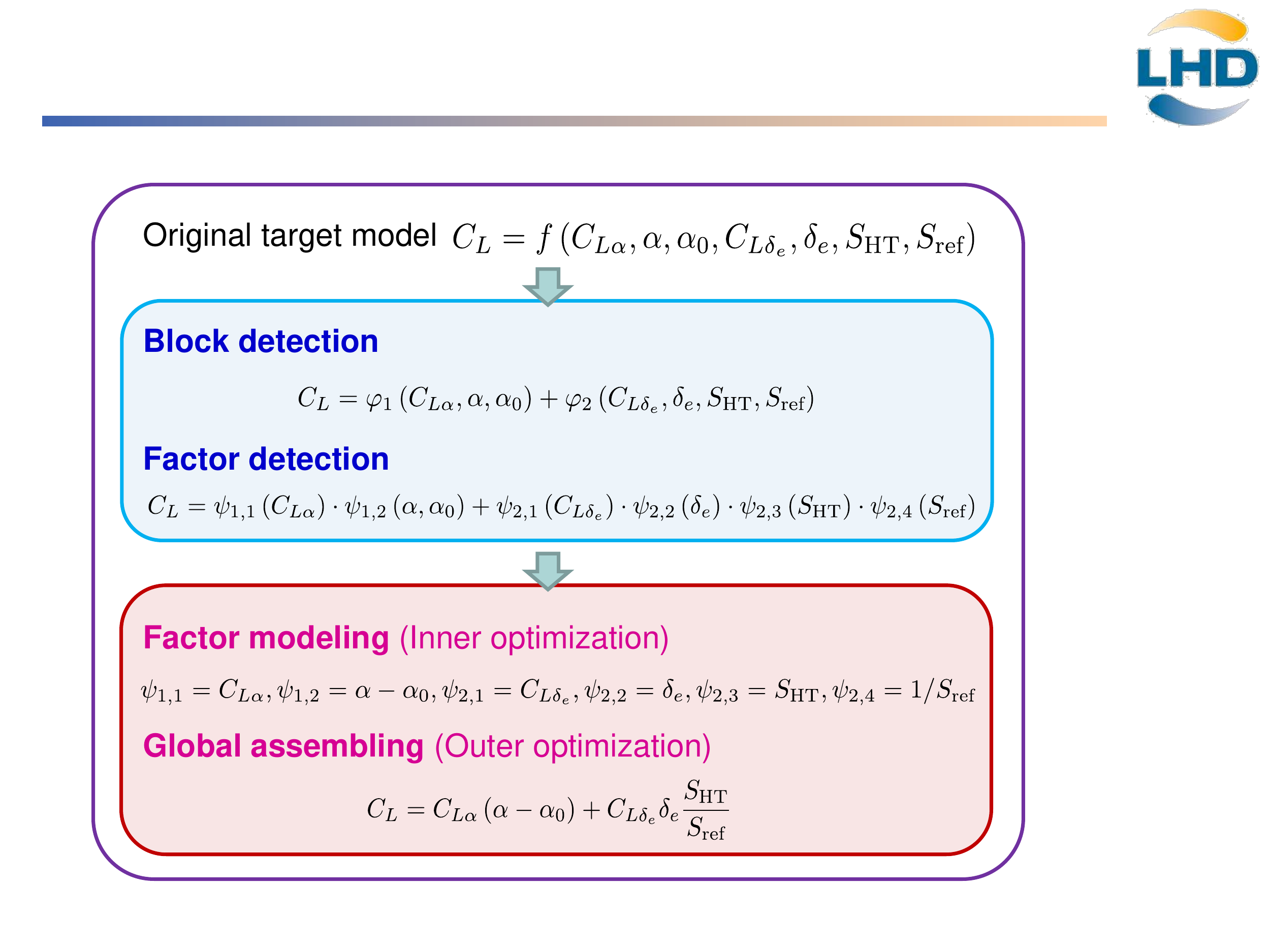}
\caption{An example procedure of BBP: The modeling of the Eq. (\ref{CL}).}
\label{fig1}
\end{figure}

\subsection{Optimization engine}
\label{Section3.4}
Factor modeling could be easily realized by global optimization algorithms or genetic programming. However, a few differences between these two methods should be considered.

In BBP, when using a global optimization method, the structure of the factors (function models) should be pre-established. For instance, functions that involve uni-variable and bi-variables should be set for function models. Sequence search and optimization methods is suitable for global optimization strategies. This means a certain function model will be determined, provided that the fitting error is small enough (e.g. mean square error $\leqslant {10^{ - 8}}$).

In genetic programming, arithmetic operations (e.g. $+$, $-$, $\times$, $\div$) and mathematical functions (e.g. $\sin$, $\cos$, $\exp$, $\ln$) should be pre-established instead of function models. The search process of GP is stochastic. This makes it easily reach premature convergence. In Section \ref{Section5}, LDSE and GPTIPS, are both chosen to test the performance of BBP.

The choice of LDSE and GPTIPS as the optimization engine is based on the obvious fact that the factor modeling might contain the non-convex optimization process (e.g. $\psi  = {m_1}\sin \left( {{m_2}x + {m_3}} \right)$). LDSE and GPTIPS can make our program easy to use, but may have an additional calculation cost. To further speed up the optimization process, the convex optimization algorithm \cite{Wei2015-1,Wei2015-2,Wei2017} is valid provided that the factor modeling could be reduced to a linear programming, quadratic programming, and semi-definite programming problem (e.g. $\psi  = {m_1}{x^{{m_2}}} + {m_3}$).

\subsection{Remarks for engineering applications}
\label{Section3.5}
The proposed method is described with functions of explicit expressions. For practical applications, data-driven modeling problems are more common, which means the explicit expression might be unavailable. To use the proposed BBP method, one can construct a surrogate model of the black-box type to represent the underlying target function \cite{Forrester2009}, and the surrogate model could be used as the target model in the BBP method. The other modeling steps are the same as described above.

Data noise is another issue to consider for applying BBP. In practical applications, the input-response data usually involves noises such as measure errors and/or system noises. Note that during the above surrogate modeling (e.g. Kriging regression \cite{Lophaven2002}) process, some noises could be suppressed. Meanwhile, the exact BiCT is now unnecessary, since the surrogate model is also not the true target function. We can use a special technique called Soft BiCT, where $\left| \rho  \right|$ is set to $\left| \rho  \right| = 1 - \varepsilon$ ($\varepsilon$ is a small positive number) instead of $\left| \rho  \right| = 1$ in BiCT. Multiple BiCT could also further suppress the noises, where each variable is fixed to more pairs of vectors (one pair in BiCT). Detailed discussions will be provided in future studies.

\section{Numerical results and discussion}
\label{Section4}
The proposed BBP is implemented in Matlab/Octave. In order to test the performance of BBP, two different optimization engines, LDSE \cite{Luo2012-LDSE} and GPTIPS \cite{Searson2010}, are used. For ease of use, a Boolean variable is used on the two selected methods. Numerical experiments on 10 cases of completely separable or partially separable target functions, as given in \ref{appendixB}, are conducted. These cases help evaluate BBP's overall capability of structure and coefficient optimization. Computational efficiency is analyzed in Section \ref{Section4.3}.

\subsection{LDSE-powered BBP}
\label{Section4.1}
We choose a global optimization algorithm, LDSE \cite{Luo2012-LDSE}, as our optimization engine. LDSE is a hybrid evolutionary algorithm for continuous global optimization. In Table \ref{LDSE-table}, case number, dimension, domain, and number of sampling points are denoted as No, Dim, Domain, and No.samples, respectively. Additionally, we record the time $T_d$ (the block and factor detection), and $T_{\text{BBP}}$ (the entire computation time of BBP), to test the efficiency of block and factor detection.

\subsubsection{Control parameter setting}
\label{Section4.1.1}
The calculation conditions are shown in Table \ref{LDSE-table}. The number of sampling points for each independent variable is 100. The regions for cases 1--5 and 7--10 are chosen as $[-3,3]^3$,$[-3,3]^4$, $[-3,3]^5$, and $[-3,3]^6$ for three-dimensional (3D), 4D, 5D, and 6D problems, respectively, while case 7 is $[1,3]^5$. The control parameters in LDSE are set as follows. The upper and lower bounds of fitting parameters are set as $-50$ and $50$. The population size $N_p$ is set to $N_p=10+10d$, where $d$ is the dimension of the problem. The maximum generations is set to $3N_p$. Note that the maximum number of partially separable variables in all target models is two in our tests. Hence, our uni-variable and bi-variables function library of BBP could be set as in Table \ref{models-table}. Recalling from Section \ref{Section3.3}, sequence search and optimization is used in BBP. The search will exit immediately if the mean square error is small enough (MSE $\leqslant {\varepsilon _{{\text{target}}}}$), and the tolerance (fitting error) is ${\varepsilon _{{\text{target}}}} = {10^{ - 6}}$. In order to reduce the effect of randomness, each test case is executed 10 times.

\begin{table}[h]
\centering
\caption{Uni-variable and bi-variables preseted models.}
\label{models-table}
\begin{tabular}{lll}
\hline\hline
No. & Uni-variable model                                & Bi-variables model                                                                                                                                                                                                                             \\ \hline
1   & $k\left( {{x^{{m_1}}} + {m_2}} \right)$           & $k\left( {{m_1}{x_1} + {m_2}{x_2} + {m_3}} \right)$                                                                                                                                                                                            \\
2   & $k\left( {{e^{{m_1}x}} + {m_2}} \right)$          & $k\left[ {{{\left( {{m_1}{x_1} + {m_2}} \right)} \mathord{\left/ {\vphantom {{\left( {{m_1}{x_1} + {m_2}} \right)} {\left( {{m_3}{x_2} + {m_4}} \right)}}} \right. \kern-\nulldelimiterspace} {\left( {{m_3}{x_2} + {m_4}} \right)}}} \right]$ \\
3   & $k\sin \left( {{m_1}{x^{{m_2}}} + {m_3}} \right)$ & $k\left( {{e^{{m_1}{x_1}{x_2}}} + {m_2}} \right)$                                                                                                                                                                                              \\
4   & $k\log \left( {{m_1}x + {m_2}} \right)$           & $k\sin \left( {{m_1}{x_1} + {m_2}{x_2} + {m_3}{x_1}{x_2} + {m_4}} \right)$                                                                                                                                                                     \\ \hline\hline
\end{tabular}
\end{table}

\subsubsection{Numerical results and discussion}
\label{Section4.1.2}
Numerical results show that LDSE-powered BBP successfully recovered all target functions exactly in sense of double precision. Once the uni- and bi-variables models are pre-set, the sequence search method makes BBP easy to find the best regression model. In practical applications, more function models could be added to the function library of BBP, provided that they are needed. On the other hand, as sketched in Table \ref{LDSE-table}, the calculation time of the separability detection $T_d$ is almost negligible. This test group shows that BBP has a good capability of structure and coefficient optimization for highly nonlinear system.
\begin{table}[h]
\centering
\caption{Performance of LDSE-powered BBP.}
\label{LDSE-table}
\begin{tabular}{lllll}
\hline\hline
Case No. & Dim & Domain                        & No. samples & $T_d/T_{\text{BBP}}$(\%) \\ \hline
1        & 3   & ${\left[ { - 3,3} \right]^3}$ & 300         & 4.35                     \\
2        & 3   & ${\left[ { - 3,3} \right]^3}$ & 300         & 2.38                     \\
3        & 4   & ${\left[ { - 3,3} \right]^4}$ & 400         & 4.7                      \\
4        & 4   & ${\left[ { - 3,3} \right]^4}$ & 400         & 2.51                     \\
5        & 4   & ${\left[ { - 3,3} \right]^4}$ & 400         & 3.32                     \\
6        & 5   & ${\left[ { 1,4} \right]^5}$   & 300         & 1.98                     \\
7        & 5   & ${\left[ { - 3,3} \right]^5}$ & 500         & 4.42                     \\
8        & 5   & ${\left[ { - 3,3} \right]^5}$ & 500         & 1.68                     \\
9        & 6   & ${\left[ { - 3,3} \right]^6}$ & 600         & 2.86                     \\
10       & 6   & ${\left[ { - 3,3} \right]^6}$ & 600         & 3.38                     \\ \hline\hline
\end{tabular}
\end{table}

\subsection{GPTIPS-powered BBP}
\label{Section4.2}
We choose a kind of genetic programming technique, GPTIPS \cite{Searson2010}, as the optimization engine. GPTIPS is a Matlab toolbox based on multi-gene genetic programming. It has been widely used in many research studies \cite{Garg2014,Alavi2017,Kaydani2014}. To provide BBP an overall evaluation of its performance for acceleration, the acceleration rate $\eta$ is defined as
\begin{equation}
\label{speedup}
\eta  = \frac{{{T_{{\text{GPTIPS}}}}}}{{{T_{{\text{BBP}}}}}},
\end{equation}
where $T_{\text{GPTIPS}}$ is the computation time of GPTIPS, and $T_{\text{BBP}}$ is the computation time of BBP driven by GPTIPS. The full names of the notations in Table \ref{LDSE-table} are the case number (Case No.), the range of mean square error of the regression model for all runs (MSE), the average computation time for all runs ($T$), and remarks of BBP test.

\subsubsection{Control parameter setting}
\label{Section4.2.1}
Similar to Section \ref{Section4.1.1}, the target models, search regions, and the number of sampling points are the same as the aforementioned test group. The control parameters of GPTIPS are set as follows. The population size $N_p=100$ and the maximum generations for re-initialization $T$ are 100,000. To reduce the influence of randomness, 20 runs are completed for each case. The termination condition is MSE $\leqslant {\varepsilon _{{\text{target}}}}$, ${\varepsilon _{{\text{target}}}} = {10^{ - 8}}$. In other words, the optimization of each factor will terminate immediately if the regression model (or its equivalent alternative) is detected, and restart automatically if it fails until generation $T$. The multi-gene of GPTIPS is switched off.

\subsubsection{Numerical results and discussion}
\label{Section4.2.2}
Table \ref{GPTIPS} shows the average performance of the 20 independent runs with different initial populations. In this test group, using the given control parameters, GPTIPS failed to obtain the exact regression model or the approximate model with the default accuracy (MSE $\leqslant 10^{-8}$) in almost 20 runs except case 2. This situation becomes even worse with increasing problem size (dimension of the problem). Additionally, as the result of ${T_{{\text{GPTIPS}}}}/{T_{{\text{BBP}}}}$ shown in Table \ref{GPTIPS}, the computational efficiency increases several orders of magnitude. This is because the computation time of BBP is determined by the dimension and complexity of each factor, not by the entire target model. This explains why BBP converges much faster than the original GPTIPS. Good performance for acceleration, structure optimization, and coefficient optimization show the potential of BBP to be applied in practical applications.

\begin{table}[h]
\scriptsize
\centering
\caption{Performance of GPTIPS and GPTIPS-powered BBP.}
\label{GPTIPS}
\begin{tabular}{lll|llll}
\hline\hline
\multirow{2}{*}{\begin{tabular}[c]{@{}l@{}}Case\\ No.\end{tabular}} & \multicolumn{2}{l|}{Results of GPTIPS}                                      & \multicolumn{4}{l}{Results of GPTIPS-powered BBP}                                                                                                                  \\ \cline{2-7} 
                                                                    & ${\text{MSE}}_{\text{GPTIPS}}$                   & $T_{\text{GPTIPS}}(s)$   & ${\text{MSE}}_{\text{BBP}}$                     & $T_{\text{BBP}}(s)$        & $\eta = T_{\text{GPTIPS}}/ T_{\text{BBP}}$ & Remarks of BBP          \\ \hline
1                                                                   & $\left[ {5.1,9.2} \right] \times {10^{ - 1}}$    & $\gg 6.23 \times {10^3}$ & $\leqslant {\varepsilon _{{\text{target}}}}$    & 1512.6                     & $>4.12$                                    & 5 runs failed           \\
2                                                                   & $\leqslant {\varepsilon _{{\text{target}}}}$     & 323.86                   & $\leqslant {\varepsilon _{{\text{target}}}}$    & 3.94                       & 82.2                                       & Solutions are all exact \\
3                                                                   & $\left[ {1.5,23.6} \right] \times {10^{ - 2}}$   & $\gg 7.41 \times {10^3}$ & $\leqslant {\varepsilon _{{\text{target}}}}$    & 3643.3                     & $>2.03$                                    & 11 runs failed          \\
4                                                                   & $\left[ {8.1,16.2} \right] \times {10^{ - 2}}$   & $\gg 6.16 \times {10^3}$ & $\leqslant {\varepsilon _{{\text{target}}}}$    & 903.87                     & $>6.8$                                     & 4 runs failed           \\
5                                                                   & $\left[ {4.5,7.3} \right] \times {10^{ - 1}}$    & $\gg 6.68 \times{10^3}$  & $\leqslant {\varepsilon _{{\text{target}}}}$    & 26.65                      & $>250.65$                                  & Solutions are all exact \\
6                                                                   & $\left[ {2.31,9.6} \right] \times {10^{ - 2}}$   & $\gg 6.31 \times {10^3}$ & $\leqslant {\varepsilon _{{\text{target}}}}$    & 4416.07                    & $>1.67$                                    & 7 runs failed           \\
7                                                                   & $\left[ {1.22,3.96} \right] \times {10^{ - 1}}$  & $\gg 8.52 \times {10^3}$ & $\leqslant {\varepsilon _{{\text{target}}}}$    & 11.57                      & $>736.39$                                  & Solutions are all exact \\
8                                                                   & $\left[ {2.1,37.2} \right] \times {10^{ - 1}}$   & $\gg 7.13 \times {10^3}$ & $\left[ {9.16,32.3} \right] \times {10^{ - 2}}$ & $\gg 1.3721 \times {10^4}$ & None                                       & All runs failed         \\
9                                                                   & $\left[ {5.4,56.3} \right] \times {10^{ - 2}}$   & $\gg 6.24 \times {10^3}$ & $\left[ {1.68,12.9} \right] \times {10^{ - 2}}$ & $\gg 6.63 \times {10^3}$   & None                                       & All runs failed         \\
10                                                                  & $\left[ {5.86,99.16} \right] \times {10^{ - 1}}$ & $\gg 7.36 \times {10^3}$ & $\leqslant {\varepsilon _{{\text{target}}}}$    & 11.62                      & $>708.26$                                  & Solutions are all exact \\ \hline\hline
\end{tabular}
\end{table}

\subsection{Computational efficiency analysis}
\label{Section4.3}
We compare the target functional spaces of conventional GP method (e.g. GPTIPS) and GPTIPS-powered BBP. The search space of BBP is each factor of the target model, not the entire target model. It is obvious that GPTIPS has a larger target function space, and GPTIPS-powered BBP might be considered a special case of GPTIPS. This is the reason why GPTIPS-powered BBP is more effective and efficient than conventional GPTIPS.

The computing time ($t$) of BBP consists of three parts:
\begin{equation}
t=t_1+t_2+t_3
\label{eqTiming}
\end{equation}
where $t_1$ is for the separability detection, $t_2$ for factors modeling, and $t_3$ for global assembling.  In \cite{Luo2017}, authors have demonstrated that both the separability detection and function recover processes are double-precision operations and thus cost much less time than the factor determination process. $t_2$ is the most expensive part used to construct a data-driven model. That is, $t \approx t_2$.

As shown in Table \ref{GPTIPS}, the CPU time for determining all factors ($t_2$) is much less than that of the target function directly ($t_d$). Therefore, in practical applications, we do not consider the computation complexity of a separable target model, but each factor of it.

\section{Conclusion}
\label{Section5}
We established a more general separable model with mixed binary operators. In order to obtain the structure of the generalized model, a block building programming (BBP) method is proposed for symbolic regression. BBP reveals the target separable model by a block and factor detection process, which divides the original model into a number of blocks, and further into factors. The method can be considered as a bi-level divide-and-conquer (D\&C) method. The separability is detected by a generalized BiCT. The factors could be easily determined by an existing optimization engine (e.g. genetic programming). Thus BBP can reduce the complexity of the optimization model, and make large reductions to the original search space. Two different optimization engines, LDSE and GPTIPS, have been applied to test the performance of BBP on 10 symbolic regression problems. Numerical results show that BBP has a good capability of structure and coefficient optimization with high computational efficiency. These advantages make BBP a potential method for modeling complex nonlinear systems in various research fields.

For future work, we plan to generalize the mathematical form of the separable function. In Definition \ref{def1}, all variables appear only once in the separable function. However, in practical applications, this condition is still too strong and is not easy to satisfy. In fact, many models have quasi-separable features. For example, the flow pass of a circular cylinder is a classical problem in fluid dynamics \cite{Anderson2011}. A valid stream function for the inviscid, incompressible flow pass of a circular cylinder of radius $R$ is
\begin{equation}
\label{flowequ}
\psi = \left( {{V_\infty }r\sin \theta } \right)\left( {1 - \frac{{{R^2}}}{{{r^2}}}} \right) + \frac{\Gamma }{{2\pi }}\ln \frac{r}{R},
\end{equation}
Eq. (\ref{flowequ}) is expressed in terms of polar coordinates, where $x=r\cos\theta$ and $y=r\sin\theta$. $V_\infty$ is the freestream velocity. $\Gamma$ is the strength of vortex flow. $R$ is the radius of the cylinder. Eq. (\ref{flowequ}) can be considered quasi-separable. That is, some variables (e.g. variable ${V_\infty }$, $\theta$, and $\Gamma$ of Eq. (\ref{flowequ})) appear only once in a concerned target model, while other variables (e.g. variable $r$ and $R$ of Eq. (\ref{flowequ})) appear more than once. This makes Eq. (\ref{flowequ}) inconsistent with the definition of the separable function. Such complicated model structures would be analyzed in further studies.

\section*{Acknowledgements}
This work was supported by the National Natural Science Foundation of China (Grant No. 11532014).

\appendix
\section{Proof of Theorem \ref{theorem3}}
\label{appendixA}
\begin{proof}
To prove the sufficient condition, consider the binary operator ${ \otimes _1}$. Eq. (\ref{SeparFuncEqu}) could be written as
\begin{equation}
\label{Appendix-1}
f\left( {\mathbf{x}} \right) = {\alpha _0} + {\alpha _1}{f_1}\left( {{{\mathbf{I}}^{\left( 1 \right)}}{\mathbf{x}}} \right){ \otimes _2}{\alpha _2}{f_2}\left( {{{\mathbf{I}}^{\left( 2 \right)}}{\mathbf{x}}} \right){ \otimes _3} \cdots { \otimes _m}{\alpha _m}{f_m}\left( {{{\mathbf{I}}^{\left( m \right)}}{\mathbf{x}}} \right),
\end{equation}
or
\begin{equation}
\label{Appendix-2}
f\left( {\mathbf{x}} \right) = {\alpha _0} \times {\alpha _1}{f_1}\left( {{{\mathbf{I}}^{\left( 1 \right)}}{\mathbf{x}}} \right){ \otimes _2}{\alpha _2}{f_2}\left( {{{\mathbf{I}}^{\left( 2 \right)}}{\mathbf{x}}} \right){ \otimes _3} \cdots { \otimes _m}{\alpha _m}{f_m}\left( {{{\mathbf{I}}^{\left( m \right)}}{\mathbf{x}}} \right).
\end{equation}

Consider the position of each binary plus operator ($+$). Assume that there are $p-1$ ($p\geqslant 1$) binary plus operators ($+$) in Eq. (\ref{SeparFuncEqu}). If $p>1$, assume the first binary operator plus ($+$) (except ${ \otimes _1}$) of Eq. (\ref{SeparFuncEqu}) appears in the middle of the sub-functions of variables $x_1, x_2,\cdots,x_{s_1}$ (including a number of $q_1+q_2$ sub-functions) and $x_{{s_1}+1}, x_{{s_1}+2},\cdots,x_{n}$, that is
\begin{equation}
\label{Appendix-5}
f\left( {\mathbf{x}} \right) = {\alpha _0} + {{\tilde \beta }_1}{\varphi _1}\left( {{x_1},{x_2}, \cdots {x_{{s_1}}}} \right) + {{\tilde \beta }_2}{{\tilde \varphi }_2}\left( {{x_{{s_1} + 1}},{x_{{s_1} + 2}}, \cdots {x_n}} \right),
\end{equation}
or
\begin{equation}
\label{Appendix-6}
f\left( {\mathbf{x}} \right) = {\alpha _0} \times {{\tilde \beta }_1}{\varphi _1}\left( {{x_1},{x_2}, \cdots {x_{{s_1}}}} \right) + {{\tilde \beta }_2}{{\tilde \varphi }_2}\left( {{x_{{s_1} + 1}},{x_{{s_1} + 2}}, \cdots {x_n}} \right).
\end{equation}

Assume that the second binary plus operator ($+$) appears in the middle of the sub-functions of variables $x_1, x_2,\cdots,x_{{s_1}+{s_2}}$ (included a number of $q_1+q_2$ sub-functions) and $x_{{s_1}+{s_2}+1}, x_{{s_1}+{s_2}+2},\cdots,x_{n}$, that is
\begin{equation}
\label{Appendix-7}
f\left( {\mathbf{x}} \right) = {\alpha _0} + {{\tilde \beta }_1}{\varphi _1}\left( {{x_1}, \cdots ,{x_{{s_1}}}} \right) + {{\tilde \beta }_2}{\varphi _2}\left( {{x_{{s_1} + 1}}, \cdots ,{x_{{s_1} + {s_2}}}} \right) + {{\tilde \beta }_3}{{\tilde \varphi }_3}\left( {{x_{{s_1} + {s_2} + 1}}, \cdots ,{x_n}} \right),
\end{equation}
or
\begin{equation}
\label{Appendix-8}
f\left( {\mathbf{x}} \right) = {\alpha _0} \times {{\tilde \beta }_1}{\varphi _1}\left( {{x_1}, \cdots ,{x_{{s_1}}}} \right) + {{\tilde \beta }_2}{\varphi _2}\left( {{x_{{s_1} + 1}}, \cdots ,{x_{{s_1} + {s_2}}}} \right) + {{\tilde \beta }_3}{{\tilde \varphi }_3}\left( {{x_{{s_1} + {s_2} + 1}}, \cdots ,{x_n}} \right).
\end{equation}

The position of the rest $p-4$ binary plus operators ($+$) could be determined in the same way. We obtain
\begin{equation}
\label{Appendix-9}
f\left( {\mathbf{x}} \right) = {\alpha _0} + \sum\limits_{i = 1}^p {{{\tilde \beta }_i}{\varphi _i}\left( {{{\mathbf{I}}^{\left( i \right)}}{\mathbf{x}}} \right)},
\end{equation}
or
\begin{equation}
\label{Appendix-10}
f\left( {\mathbf{x}} \right) = {\alpha _0} \times \sum\limits_{i = 1}^p {{{\tilde \beta }_i}{\varphi _i}\left( {{{\mathbf{I}}^{\left( i \right)}}{\mathbf{x}}} \right)}.
\end{equation}
${{\mathbf{I}}^{\left( i \right)}} \in {\mathbb{R}^{{s_i} \times n}}$ is the partitioned matrix of the identity matrix ${\mathbf{I}} \in {\mathbb{R}^{n \times n}}$, $\sum\limits_{i = 1}^p {{s_i}}  = n$. For Eq. (\ref{Appendix-9}), ${\beta _0}={\alpha _0}$, ${\beta _i}={{\tilde \beta }_i}$. For Eq. (\ref{Appendix-10}), ${\beta _0} = 0$, ${\beta _i}={\alpha _0}{{\tilde \beta }_i}$. Hence, the left-hand side of Eq. (\ref{SeparFuncEquDetail}) can be obtained.

If $p=1$, then there is no binary plus operator ($+$) (except ${ \otimes _1}$) in Eq. (\ref{SeparFuncEqu}). Under this condition, it is obvious that Eq. (\ref{Appendix-9}) and (\ref{Appendix-10}) are still satisfied.

Now decide the position of each binary times operator ($\times$). Consider the first sub-function ${{\varphi _1}\left( {{{\mathbf{I}}^{\left( 1 \right)}}{\mathbf{x}}} \right)}$ in Eq. (\ref{Appendix-9}) and (\ref{Appendix-10}). Assume that the first binary times operator ($\times$) of the sub-function ${{\varphi _1}\left( {{{\mathbf{I}}^{\left( 1 \right)}}{\mathbf{x}}} \right)}$ appears in the middle of the sub-functions of variables $x_1, x_2,\cdots,x_{s_{1,1}}$ and $x_{s_{{1,1}}+1}, x_{s_{{1,1}}+2},\cdots,x_{s_1}$, that is
\begin{equation}
\label{times1}
{\varphi _1}\left( {{{\mathbf{I}}^{\left( 1 \right)}}{\mathbf{x}}} \right)={\psi _{1,1}}\left( {{x_1},{x_2}, \cdots ,{x_{{s_{1,1}}}}} \right) \cdot {{\tilde \psi }_1}\left( {{x_{{s_{1,1}} + 1}},{x_{{s_{1,1}} + 2}}, \cdots ,{x_{{s_1}}}} \right).
\end{equation}
The second binary times operator ($\times$) of the sub-function ${{\varphi _1}\left( {{{\mathbf{I}}^{\left( 1 \right)}}{\mathbf{x}}} \right)}$ appears in the middle of the sub-functions of variables ${x_1},{x_2}, \cdots ,{x_{{s_{1,1}} + {s_{1,2}}}}$ and ${x_{{s_{1,1}} + {s_{1,2}} + 1}},{x_{{s_{1,1}} + {s_{1,2}} + 2}}, \cdots ,{x_{{s_1}}}$, that is
\begin{equation}
\label{times2}
{\varphi _1}\left( {{{\mathbf{I}}^{\left( 1 \right)}}{\mathbf{x}}} \right)={\psi _{1,1}}\left( {{x_1}, \cdots ,{x_{{s_{1,1}}}}} \right) \cdot {\psi _{1,2}}\left( {{x_{{s_{1,1}} + 1}}, \cdots ,{x_{{s_{1,1}} + {s_{1,2}}}}} \right) \cdot {{\tilde \psi }_2}\left( {{x_{{s_{1,1}} + {s_{1,2}} + 1}}, \cdots ,{x_{{s_1}}}} \right).
\end{equation}

The position of the rest binary times operators ($\times$) of the sub-function ${{\varphi _1}\left( {{{\mathbf{I}}^{\left( 1 \right)}}{\mathbf{x}}} \right)}$ could be determined in the same way. Then, ${\varphi _1}\left( {{{\mathbf{I}}^{\left( 1 \right)}}{\mathbf{x}}} \right)$ could be rewritten as 
\begin{equation}
{\varphi _1}\left( {{{\mathbf{I}}^{\left( 1 \right)}}{\mathbf{x}}} \right) = \prod\limits_{j = 1}^{{q_1}} {{\psi _{1,j}}\left( {{\mathbf{I}}_j^{\left( 1 \right)}{\mathbf{x}}} \right)},
\end{equation}
where ${{\mathbf{I}}_j^{\left( 1 \right)}} \in {\mathbb{R}^{{s_{1,j}} \times n}}$ is the partitioned matrix of ${{\mathbf{I}}^{\left( 1 \right)}}$, namely ${{\mathbf{I}}^{\left( 1 \right)}} = \left[ {\begin{array}{*{20}{c}}
  {{\mathbf{I}}_1^{\left( 1 \right)}}&{{\mathbf{I}}_2^{\left( 1 \right)}}& \cdots &{{\mathbf{I}}_{{q_1}}^{\left( 1 \right)}} 
\end{array}} \right]$, $\sum\limits_{j = 1}^{{q_1}} {{s_{1,j}}}  = {s_1}$. Hence, for arbitrary sub-function ${\varphi _i}\left( {{{\mathbf{I}}^{\left( i \right)}}{\mathbf{x}}} \right)$, we have
\begin{equation}
\label{times3}
{\varphi _i}\left( {{{\mathbf{I}}^{\left( i \right)}}{\mathbf{x}}} \right) = \prod\limits_{j = 1}^{{q_i}} {{\psi _{i,j}}\left( {{\mathbf{I}}_j^{\left( i \right)}{\mathbf{x}}} \right)},
\end{equation}
where ${{{\mathbf{I}}_{j}^{\left( i \right)}}\in {\mathbb{R}^{{s_{i,j}} \times n}}} $ is the partitioned matrix of ${{\mathbf{I}}^{\left( i \right)}}$, namely ${{\mathbf{I}}^{\left( i \right)}} = \left[ {\begin{array}{*{20}{c}}
  {{\mathbf{I}}_1^{\left( i \right)}}&{{\mathbf{I}}_2^{\left( i \right)}}& \cdots &{{\mathbf{I}}_{{q_i}}^{\left( i \right)}} 
\end{array}} \right]$, $\sum\limits_{j = 1}^{{q_i}} {{s_{i,j}} = {s_i}}$, $\sum\limits_{i = 1}^p {{q_i}}  = m$. Substituting Eq. (\ref{times3}) into Eq. (\ref{Appendix-9}) and (\ref{Appendix-10}), the right-hand side of Eq. (\ref{SeparFuncEquDetail}) can be obtained.

To prove the necessary condition, expand Eq. (\ref{SeparFuncEquDetail}). Since $\sum\limits_{i = 1}^p {{q_i}}  = m$, there are $m$ sub-functions $f_i$ and each function can be connected with a binary operator $\otimes_i$. Then, the Eq. (\ref{SeparFuncEqu}) can be easily obtained.

\end{proof}

\section{10 target models of numerical experiments}
\label{appendixB}
The target models which are tested in Section \ref{Section4} with all the  blocks boxed are given as follows:
\begin{description}
\item Case 1. $f\left( {\mathbf{x}} \right) = 1.2{\text{  +  }}10 * \boxed{\sin \left( {2{x_1} - {x_3}} \right)} - 3 * \boxed{x_2^2}$, where ${x_i} \in \left[ { - 3,3} \right],i = 1,2,3.$
\item Case 2. $f\left( {\mathbf{x}} \right) = 0.5 * \boxed{{e^{{x_3}}} * \sin {x_1} * \cos {x_2}}$, where ${x_i} \in \left[ { - 3,3} \right],i = 1,2,3.$
\item Case 3. $f\left( {\mathbf{x}} \right) = \boxed{\cos \left( {{x_1} + {x_2}} \right)} + \boxed{\sin \left( {3{x_3} - {x_4}} \right)}$, where ${x_i} \in \left[ { - 3,3} \right],i = 1,2,3,4.$
\item Case 4. $f\left( {\mathbf{x}} \right) = 5 * \boxed{\frac{{\sin \left( {3{x_1}{x_2}} \right)}}{{{x_3} + {x_4}}}}$, where ${x_i} \in \left[ { - 3,3} \right],i = 1,2,3,4.$
\item Case 5. $f\left( {\mathbf{x}} \right) = 2 * \boxed{{x_1} * \sin \left( {{x_2} + {x_3}} \right)} - \boxed{\cos {x_4}}$, where ${x_i} \in \left[ { - 3,3} \right],i = 1,2,3,4.$
\item Case 6. $f\left( {\mathbf{x}} \right) = 10 + 0.2 * \boxed{{x_1}} - 5 * \boxed{\sin \left( {5{x_2} + {x_3}} \right)} + \boxed{\ln \left( {3{x_4} + 1.2} \right)} - 1.2 * \boxed{{e^{0.5{x_5}}}}$, where ${x_i} \in \left[ {1,4} \right],i = 1,2, \cdots ,5.$
\item Case 7. $f\left( {\mathbf{x}} \right) = 10 * \boxed{\frac{{\sin \left( {{x_1}{x_2}} \right) * {x_3}}}{{{x_4} + {x_5}}}}$, where ${x_i} \in \left[ { - 3,3} \right],i = 1,2, \cdots ,5.$
\item Case 8. $f\left( {\mathbf{x}} \right) = 1.2{\text{  +  }}2 * \boxed{{x_4} * \cos {x_2}} + 0.5 * \boxed{{e^{1.2{x_3}}} * \sin 3{x_1}} - 2 * \boxed{\cos \left( {1.5{x_5} + 5} \right)}$, where ${x_i} \in \left[ { - 3,3} \right],i = 1,2, \cdots ,5.$
\item Case 9. $f\left( {\mathbf{x}} \right) = 100 * \boxed{\frac{{\cos \left( {{x_3}{x_4}} \right)}}{{{e^{{x_1}}} * {x_2}^{1.2}}} * \sin \left( {1.5{x_5} - 2{x_6}} \right)}$, where ${x_i} \in \left[ { - 3,3} \right],i = 1,2, \cdots ,6.$
\item Case 10. $f\left( {\mathbf{x}} \right) = \boxed{\frac{{{x_1} + {x_2}}}{{{x_3}}}} + \boxed{{x_4} * \sin \left( {{x_5}{x_6}} \right)}$, where ${x_i} \in \left[ { - 3,3} \right],i = 1,2, \cdots ,6.$
\end{description}

\bibliographystyle{elsarticle-num}

\bibliography{BBP}

\end{document}